\newcommand{\bxi}{\bm{\xi}}
\newcommand{\bpsi}{\bm{\psi}}
\newcommand{\bmu}{\bm{\mu}}
\newcommand{\cA}{\mathcal{A}}
\newcommand{\cK}{\mathcal{K}}
\newcommand{\bSa}{\bm{S}}
\newcommand{\blambda}{\bm{\lambda}}
\newcommand{\br}{\bm{r}}
\newcommand{\bL}{\bm{L}}
\newcommand{\bx}{\bm{x}}
\newcommand{\E}{\mathbb{E}}
\newcommand{\dkl}{D_{\textup{KL}}}
\newcommand{\ars}{\textup{A}}
\newcommand{\grs}{\textup{G}}
\newcommand{\vari}{\textup{Var}}
\renewcommand{\Pr}{\mathbb{P}}
\DeclareMathOperator{\T}{\mathsf{T}}
\renewcommand{\qedsymbol}{$\blacksquare$}
\newtheorem{lemma}{Lemma}
\newtheorem{corollary}{Corollary}
\newtheorem{assumption}{Assumption}
\newtheorem{remark}{Remark}
\newtheorem{example}{Example}
\title{On the Fusion Strategies for Federated Decision Making}
\name{Mert Kayaalp$^\star$, Yunus \.Inan$^\star$, Visa Koivunen$^\dagger$, Emre Telatar$^\star$, Ali H. Sayed$^\star$ \thanks{This work was supported in part by grant 205121-184999 from the Swiss National Science Foundation (SNSF). Emails: \{mert.kayaalp, yunus.inan, emre.telatar, ali.sayed\}@epfl.ch., visa.koivunen@aalto.fi}} 
\address{$^\star$École Polytechnique Fédérale de Lausanne (EPFL) \\ $^\dagger$Aalto University}
\begin{document}
\ninept
\maketitle
\begin{abstract}
We consider the problem of information aggregation in federated decision making, where a group of agents collaborate to infer the underlying state of nature without sharing their private data with the central processor or each other. We analyze the non-Bayesian social learning strategy in which agents incorporate their individual observations into their opinions (i.e., soft-decisions) with Bayes rule, and the central processor aggregates these opinions by arithmetic or geometric averaging. Building on our previous work, we establish that both pooling strategies result in asymptotic normality characterization of the system, which, for instance, can be utilized to derive approximate expressions for the error probability. We verify the theoretical findings with simulations and compare both strategies.
\end{abstract}
\section{Introduction and Related Work}

 The centralized strategy has been the traditional form for collaborative decision-making, where data is collected from all sources and processed at a central controller, fusion center or cloud. However, this approach raises concerns about data privacy and communication costs. In this work, we study the paradigm of federated decision-making, which enables cooperative decision-making without data collection \cite{elkordy2023,wang2021federated,ramage2020federated}. For example, in healthcare applications, federated decision-making can be used to test possible hypotheses without gathering private data from siloed institutions \cite{hallock2021}. Moreover, in classification tasks, agents (e.g., mobile device users) may have different views of a physical phenomenon such as scenery as in multi-view learning \cite{bordignon2021learning}, and the server may want to classify the scene or infer the state of environment without collecting the local data of users. 

 The setting we consider in this work, which is described in Sec.~\ref{sec:problem_formulation}, is an instance of locally Bayesian (a.k.a. non-Bayesian) social learning \cite{jadbabaie_2012,zhao_2012,nedic_2017,lalitha_2018,bordignon2021adaptive} that is based on general decentralized networks. Here, we only consider the special case of a star network topology where all agents are connected to a central node. In social learning, the goal is to infer the hypothesis that best describes the observations received from the environment. The true hypothesis is common for all agents and hence cooperation is of interest for all agents. To this end, at each time instant, \((i)\) distributed agents process their private data locally and send their opinions (i.e., soft-decisions) to the server, \((ii)\) the server pools the information received from agents and broadcasts the aggregated belief back to the agents, and \((iii)\) the agents use the broadcasted belief as a prior belief for their subsequent opinions. 
 
 Note that this setting has close ties with the distributed detection literature \cite{tsitsiklis93, varshney2012book, zou2010cooperative,bajovic2012large, icc_detection}, which typically involves sharing log-likelihood ratios for each data and assumes spatial independence and/or homogeneity across agents. In contrast, in the social learning framework, agents share beliefs (posteriors) formed over time, rather than a statistic about individual data. In this way, they can incorporate their individual priors as well. Furthermore, in social learning, data is not required to be independent between agents nor identically distributed. In fact, agents can possess different data types (e.g., text, video), as in multi-modal machine learning --- see Fig.~\ref{fig:federated_illustration}. The federated decision-making is also related to federated learning \cite{mcmahan2017,li2020,kairouz2021advances,rizk2022federated} as they both fall under the umbrella of federated analytics \cite{elkordy2023,wang2021federated,ramage2020federated}, which aims to enable cooperation among multiple users while preserving privacy of data. However, federated learning focuses on training machine learning models with labeled data samples, whereas the current setting involves making collaborative inferences without sharing raw observations.
  \begin{figure}[]
     \centering
     \includegraphics[width=0.7\columnwidth]{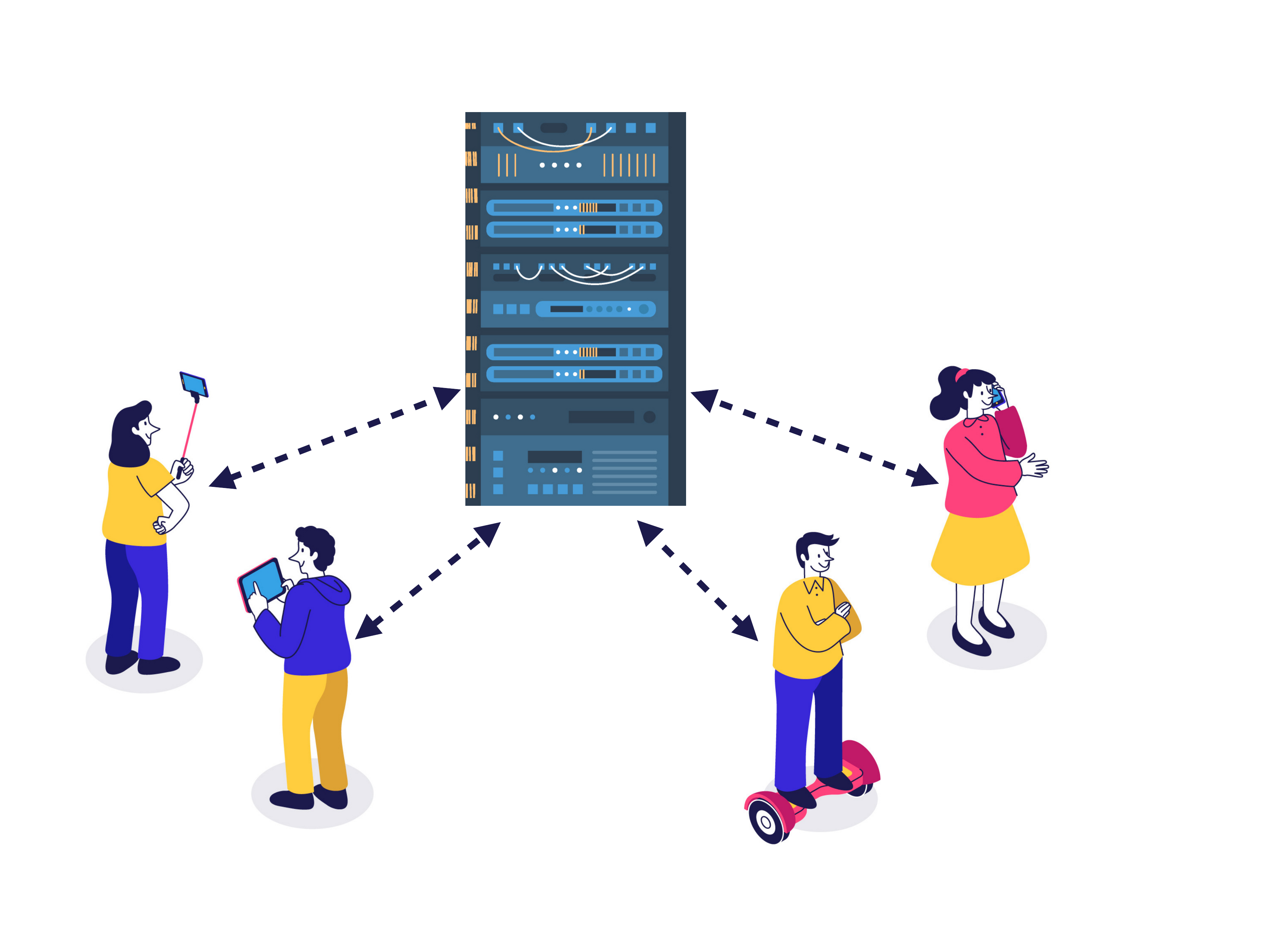}
     \caption[]{Data types at the edge devices can be highly heterogeneous. Image from: freepik.com.}
     \label{fig:federated_illustration}
     \vspace{-1em}
 \end{figure}

 
In this paper, we address the following problem: How should the server aggregate information received from multiple distributed agents that process the data locally and share only their local soft decision statistics? Two of the most widely used methods for combining probability vectors are \emph{arithmetic} averaging (AA), a.k.a. linear opinion pooling, and \emph{geometric} averaging (GA), a.k.a. logarithmic opinion pooling. The distinct properties of these procedures are well-established in the literature --- see, e.g., \cite{koliander2022fusion}. In our previous work \cite{kayaalp2022aaga_journal}, we investigated the performance of AA and GA strategies in the general social learning setting. For the special case of federated architectures, we derived an exact expression for the asymptotic convergence rates of beliefs. Different from prior works that only compare AA and GA with some assumptions, such as assuming only one step of fusion \cite{li2019second,li2021some,gao2020} or assuming specific/particular distributions like Gaussian or Poisson \cite{li2019second,gao2020,chang2010}, our analysis considered repeated fusion of beliefs without any assumption on the distribution of data. In this work, we extend our previous findings and show that the asymptotic behavior of both AA and GA can be characterized with normal distributions and the asymptotic error probabilities of both strategies can be established accordingly. The results provide further insight into the arithmetic and geometric fusion rules, and can be utilized for designing federated decision-making systems. \\

\noindent \textbf{Notation}. Random variables are written in boldface letters. We use ``proportional to'' symbol \(\propto\) whenever the LHS of an equation is a proper normalization of the RHS. The KL-divergence between two distributions \(p\) and \(q\) is denoted by \(\dkl (p||q)\).

\section{Federated Decision Making}\label{sec:problem_formulation}

We consider a distributed hypothesis testing task in which a group \(\mathcal{K}\) of \( K \) agents, aided by a central controller or server, performs inference on a phenomenon of interest. More formally, the agents seek to choose the true hypothesis \( \theta^\circ \) from a finite set of \( H \) hypotheses, \( \Theta=\{1,2,\dots,H \}\). At each time instant \(i\), agent \(k\) receives a personal observation \(\bxi_{k,i}\) from the environment that is distributed according to the marginal likelihood function \(L_k (\cdot | \theta^\circ )\) (since the true hypothesis is $\theta^\circ$). The likelihood functions are allowed to depend on the agent index \(k\), so that the data distributions across the different agents can be heterogeneous. We assume that the environment is stationary, and that the data is independent and identically distributed (i.i.d.) over time at each agent. However, we do not make any assumption on spatial dependence, i.e., there is no assumption on the dependence among different agents. Each agent \(k\) only knows its own local likelihood models \(L_k (\xi | \theta)\) for each \(\theta \in \Theta\), which represent how likely an observation \(\xi\) is produced by \(\theta\). Furthermore, the server does not need to possess any information about the distribution of data over the agents.

The agents do not share their private raw data with the server: all data processing and ownership are local. Each agent \(k\) only shares its belief (i.e., opinion) \( \bpsi_{k,i}\) about the possible hypotheses, which is a probability mass function over \(\Theta\). In other words, \( \bpsi_{k,i} (\theta)\) represents the amount of confidence agent \( k \) has on the proposition ``\( \theta = \theta^\circ \)'', at time instant \(i\). Once the server receives the intermediate beliefs from the distributed agents, it pools them and broadcasts the averaged belief back to the agents. In this work, we study the widely used linear (AA) and logarithmic (GA) pooling strategies, although there are other variations \cite{koliander2022fusion}. We now motivate the setting with two real-world applications.

\begin{example}[\textbf{Crowd size estimation}]
Crowd counting aims to estimate the number of people in a given area using visual or other sensory data. The possible hypotheses are the size of the crowd, such as ``the number of people in the event is between 500-1000''. Agents, i.e., users cooperating with the server, obtain data in the form of photos and videos of their surroundings, or their mobile devices can measure the number of possible connections over Bluetooth. Due to privacy concerns, users may prefer not to share their raw data with the server. In addition, transmitting raw data can create communication bottlenecks at the server. Therefore, employing the social learning strategy, users would only share their beliefs over the set of hypotheses. They can form their beliefs by using local likelihood models, e.g., by using pre-trained neural network models on their devices to estimate the crowd size based on images \cite{liu2019context}. \qed
\end{example}

\begin{example}[\textbf{Flexible spectrum use}]
 Spatially distributed sensors or mobile devices can try to estimate the number of targets or active wireless emitters within a particular area. For example, in the radio spectrum, the mobile devices can measure the local occupancy and field levels to distinguish between idle and congested spectrum, or map the radio environment. This practice can enhance the environmental awareness, and allocate the resources optimally in order to improve the wireless communication. Exchanging beliefs only rather than raw data can be advantageous in terms of privacy and communication costs.
 \qed
\end{example}

To form the beliefs, agents execute the non-Bayesian social learning strategy \cite{jadbabaie_2012,zhao_2012,nedic_2017,lalitha_2018} under the special case of star topologies. At each time instant \(i\), agents first process their observations in a \emph{locally} Bayesian manner:
\begin{align}\label{eq:dif_adapt_step}
 \bpsi_{k,i} (\theta) &\propto L_k(\bxi_{k,i} | \theta)\bmu_{i-1} (\theta) \qquad \text{(Adapt)},
\end{align}
and then share these intermediate beliefs with the server. If the server employs AA for information fusion, the updated belief is a weighted arithmetic average of the intermediate beliefs from different agents: 
\begin{align}\label{eq:linear_fusion}
   \bmu_{i}(\theta) &= \sum_{k \in \mathcal{K}}{\pi_k} \bpsi_{k,i}(\theta) \qquad\text{(AA)}.
\end{align}
Here, \(\pi= [\pi_1, \dots, \pi_K]^{\T}\) is a vector of the confidence weights \(\pi_k\) the server assigns to each agent \(k\) \cite{varshney2012book,Sayed14}. We assume that these weights are positive, sum up to 1, and are constant over time. Alternatively, if the fusion rule is GA, the updated belief is a weighted geometric average:
\begin{align}\label{eq:geometric_fusion}
   \bmu_{i}(\theta) &\propto  \prod_{k \in \mathcal{K} } (\bpsi_{k,i} (\theta ))^{\pi_k} \quad\text{(GA)}.
\end{align}
The server then sends the updated belief to the agents, and they execute the same steps repeatedly over time by using the updated beliefs and locally observed data. 

\begin{remark}
An important distinction between AA and GA lies in their handling of initial beliefs and their sensitivity to individual beliefs. In AA, only one agent having a positive initial belief on the true hypothesis is sufficient. In contrast, in GA, all agents must have positive initial beliefs on the truth in order not to discard it from the beginning. Furthermore, GA is more sensitive to small beliefs compared to AA. This is because, if some agents' beliefs are very small, they impact the product more significantly compared to the sum. In other words, GA grants the agents a veto power: if one agent transmits a belief entry as 0, the combined belief also becomes 0. In general, giving this level of autonomy to individual agents depends on the application at hand. \qed
\end{remark}

\section{Asymptotic Normality of AA and GA}

In this section, we analyze the asymptotic behavior under both fusion strategies. In order to avoid pathological cases, we assume that each observation has finite information about the true hypothesis. That is to say, for each agent \( k \) and for each hypothesis \( \theta \in \Theta \), we assume \( \dkl( L_k (\cdot | \theta^\circ ) ||  L_k (\cdot | \theta )) < \infty \). This implies that likelihood functions have the same support. Moreover, in order to uniquely distinguish the true hypothesis, we need the following condition.

\begin{assumption}[\textbf{Global identifiability}]\label{assum:global_identification}
For each wrong hypothesis \( \theta \neq \theta^\circ \), there exists at least one clear-sighted agent \( k \) with \(  \dkl( L_k (\cdot | \theta^\circ ) ||  L_k (\cdot | \theta )) > 0 \). \hfill\qedsymbol
\end{assumption}
Note that this assumption does not require \emph{local} identifiability, which is an agent's ability of inferring \( \theta^\circ \) without any cooperation. As a result, all agents can benefit from cooperation. Under Assumption~\ref{assum:global_identification}, it is already known that AA \cite{jadbabaie_2012,zhao_2012,jadbabaie2013information} and GA \cite{nedic_2017,lalitha_2018} result in consistent truth learning, that is, the beliefs on the true hypothesis converge to one almost surely. In this work, we further show that under both AA and GA fusion rules, the asymptotic error probabilities can be calculated with Gaussian cumulative distribution functions (CDFs). We start with the AA strategy.

\begin{lemma}[\textbf{Asymptotic normality of AA}]\label{prop:gaussianAA} For each wrong hypothesis $\theta \neq \theta^\circ$, we define the mean and variance
    \begin{equation}\label{eq:aa_mean}
    \rho_{\ars} \triangleq -\E\Bigg[\log\Bigg(\sum_{k \in \cK} \pi_k \br_{k,i}\Bigg)\Bigg]
    ,\ \sigma_{\ars}^2 \triangleq \vari\Bigg[\log\Bigg(\sum_{k\in\cK}\pi_k \br_{k,i}\Bigg)\Bigg],
    \end{equation}
    in terms of the likelihood ratios \vspace{-0.7em}
    \begin{equation}
        \br_{k,i} \triangleq \frac{L_k (\bxi_{k,i} | \theta)}{ L_k (\bxi_{k,i} | \theta^\circ)}.
    \end{equation}
    If $\sigma_{\ars}^2 < \infty$, it holds under Assumption~\ref{assum:global_identification} that
    \begin{equation}
        \lim_{i \to \infty} \Pr\bigg(\frac{\log\bmu_i(\theta)+\rho_{\ars} i}{\sigma_{\ars}\sqrt{i}} \leq t\bigg) = \Phi(t),
    \end{equation}
    where $\Phi(t)$ is the standard Gaussian cumulative distribution function.
\end{lemma}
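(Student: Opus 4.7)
The plan is to write $\log\bmu_i(\theta)$ as a partial sum of i.i.d.\ terms plus a negligible residual, and then invoke the classical central limit theorem together with Slutsky's theorem. Substituting the adaptation step \eqref{eq:dif_adapt_step} into the AA pooling rule \eqref{eq:linear_fusion} and factoring out $\bmu_{i-1}(\theta)$ yields the recursion
\[
\bmu_i(\theta) = \bmu_{i-1}(\theta)\sum_{k\in\cK}\pi_k\frac{L_k(\bxi_{k,i}|\theta)}{\sum_{\theta'\in\Theta}L_k(\bxi_{k,i}|\theta')\bmu_{i-1}(\theta')},
\]
so telescoping in the logarithm produces $\log\bmu_i(\theta) = \log\bmu_0(\theta) + \sum_{j=1}^i\bzet_j$, where $\bzet_j$ denotes the log of the parenthesized sum at time $j$. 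Since $\bmu_{j-1}(\theta^\circ)\to 1$ a.s., the natural i.i.d.\ surrogate is $\tilde\bzet_j \triangleq \log\big(\sum_{k\in\cK}\pi_k\br_{k,j}\big)$, which has mean $-\rho_{\ars}$ and variance $\sigma_{\ars}^2<\infty$ by hypothesis.

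Next, I would bound the residual $\bzet_j - \tilde\bzet_j$. Writing it as the logarithm of a ratio of two weighted averages of the $\br_{k,j}$, its magnitude is controlled by a monotone function of $\max_k|\bm{\epsilon}_{k,j}|$, where $\bm{\epsilon}_{k,j}\triangleq\sum_{\theta'\neq\theta^\circ}\big(L_k(\bxi_{k,j}|\theta')/L_k(\bxi_{k,j}|\theta^\circ)-1\big)\bmu_{j-1}(\theta')$ quantifies the deviation of the pooled denominator from $L_k(\bxi_{k,j}|\theta^\circ)$. From our previous work \cite{kayaalp2022aaga_journal}, $\bmu_{j-1}(\theta')$ decays to $0$ at the exponential a.s.\ rate $\rho_{\ars}$, so for any $\epsilon>0$, $\bmu_{j-1}(\theta')\leq e^{-(\rho_{\ars}-\epsilon)j}$ eventually a.s. Meanwhile, Markov's inequality applied to the unit-mean ratios $L_k(\bxi_{k,j}|\theta')/L_k(\bxi_{k,j}|\theta^\circ)$, followed by Borel--Cantelli, gives $L_k(\bxi_{k,j}|\theta')/L_k(\bxi_{k,j}|\theta^\circ)\leq e^{\epsilon j}$ eventually a.s. Choosing $\epsilon<\rho_{\ars}/2$ makes $|\bm{\epsilon}_{k,j}|$ decay geometrically and hence summable in $j$, so $\sum_j|\bzet_j - \tilde\bzet_j|<\infty$ a.s.; in particular, the cumulative residual is $O(1)$ a.s.

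Consequently $\log\bmu_i(\theta) = \sum_{j=1}^i\tilde\bzet_j + O(1)$ a.s. The standard CLT for the i.i.d.\ sequence $\{\tilde\bzet_j\}$ gives $\big(\sum_{j=1}^i\tilde\bzet_j + \rho_{\ars} i\big)/(\sigma_{\ars}\sqrt{i})\Rightarrow\cN(0,1)$, and Slutsky's theorem then absorbs the $O(1)$ remainder together with $\log\bmu_0(\theta)$ to produce the stated Gaussian limit. The hard part will be the residual control in the second step: the likelihood ratios $L_k(\bxi_{k,j}|\theta')/L_k(\bxi_{k,j}|\theta^\circ)$ need not be uniformly bounded, so the argument must couple the exponential a.s.\ decay of $\bmu_{j-1}(\theta')$ imported from \cite{kayaalp2022aaga_journal} with a Borel--Cantelli tail bound on these ratios, ensuring that the geometric decay of the beliefs dominates any at-most-exponential growth of the ratios.
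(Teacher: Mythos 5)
Your argument is correct, and it follows the same overall strategy as the paper's proof in Appendix~\ref{appendix:prop_aa} --- both decompose $\log\bmu_i(\theta)$ into the i.i.d.\ partial sum $\sum_j \log(\sum_k \pi_k \br_{k,j})$ plus a residual, import the almost-sure rate $\frac1i\log\bmu_i(\theta)\to-\rho_{\ars}$ from \cite{kayaalp2022aaga_journal} to control that residual, and finish with the CLT --- but the technical handling of the residual is genuinely different. The paper argues one-sidedly: under the high-probability event $\cA_0$ it derives separate upper and lower bounds on $\log\bmu_i(\theta)$, controls the upper correction by a deterministic summable series and the lower correction $\bSa_i$ via the monotone convergence theorem (its expectation is finite because the ratios $\br_{k,j}(\theta')$ have mean at most one, hence $\bSa_\infty<\infty$ a.s.\ and $\bSa_i/\sqrt i\to0$), and then sandwiches the limit with a $\liminf$/$\limsup$, $\delta$-and-$\Pr(\overline{\cA^\pm})$ argument. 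You instead bound the per-step residual $\bzet_j-\tilde\bzet_j$ symmetrically through $\max_k|\log(1+\bm{\epsilon}_{k,j})|$, obtain almost-sure geometric decay of $\bm{\epsilon}_{k,j}$ by coupling the exponential decay of the wrong beliefs with a Markov--Borel--Cantelli tail bound $L_k(\bxi_{k,j}|\theta')/L_k(\bxi_{k,j}|\theta^\circ)\le e^{\epsilon j}$ eventually a.s., conclude a.s.\ summability of the residuals, and invoke Slutsky. Your route avoids the two-part sandwich and the explicit high-probability bookkeeping, at the price of needing the Borel--Cantelli step (which the paper's MCT argument sidesteps), and it actually yields a slightly stronger conclusion (an a.s.\ bounded cumulative residual rather than one that is merely $o(\sqrt i)$ on high-probability events). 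One small correction: since $\bm{\epsilon}_{k,j}$ involves the beliefs on \emph{all} wrong hypotheses $\theta'$, the condition $\epsilon<\rho_{\ars}/2$ should be stated with respect to the minimum of the rates $\rho_{\ars}(\theta')$ over $\theta'\neq\theta^\circ$ (all positive under Assumption~\ref{assum:global_identification}), not just the rate of the fixed $\theta$; this is the same per-hypothesis-rate interpretation the paper implicitly needs in its definition of $\cA_0$ in \eqref{eq:a0}, and it does not affect the validity of your argument.
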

\begin{proof}[Proof Sketch] For the complete proof, see Appendix~\ref{appendix:prop_aa}. Consider an arbitrary $\theta \neq \theta^\circ$. For notational simplicity, define
\begin{equation}
 \bx_i \triangleq \log {\bmu_i(\theta)}, \quad \bL_i \triangleq \log \Bigg (\sum_{k \in \cK} \pi_k \br_{k,i} \Bigg).
\end{equation}
Since $\bL_i$'s are assumed to be i.i.d. and of finite variance, the central limit theorem \cite{sayed_2022} yields:
\begin{equation}
 \lim_{i \to \infty} \Pr\bigg( \dfrac{\sum_{j=1}^i \bL_j + \rho_\ars i}{\sigma_\ars\sqrt{i}} \leq t\bigg) = \Phi(t).
\end{equation}
Therefore, if we show that the distance between $\frac 1 {\sqrt{i}} \bx_i$ and $ \frac 1 {\sqrt{i}} \sum_{j = 1}^i\bL_j$ goes to zero with high probability, we are done. To that end, we utilize the result  from \cite{kayaalp2022aaga_journal} which states
\begin{equation}\label{eq:from_journal_almost_sure}
    \frac 1 i \bx_i \to -\rho_\ars  \quad \text{as} \:\:  i \to \infty ,\:\: \text{almost surely}.
\end{equation}
In particular, we use the following bootstrapping argument on \eqref{eq:from_journal_almost_sure}: There exists an \(i_0\) after which (i.e., \(\forall i \geq i_0\))
\begin{equation}
    \bmu_i(\theta) \leq \exp\{{-i(\rho_{\ars} - \epsilon)}\},\ \forall \theta \neq \theta^\circ
\end{equation}
with high probability. This result enables a finer study of the evolution of \(\bx_i\) as \(i \to \infty\).
\end{proof}
\noindent Note that Lemma~\ref{prop:gaussianAA} can equivalently be given in terms of the log-belief ratios.
\begin{corollary} If we define the log-belief ratio $\blambda_i(\theta) \triangleq \log\dfrac{\bmu_i(\theta^\circ)}{\bmu_i(\theta)}$, for each $\theta \neq \theta^\circ$, Lemma~\ref{prop:gaussianAA} implies that
    \begin{equation}
        \lim_{i \to \infty} \Pr\bigg(\frac{\blambda_i(\theta)-\rho_A i}{\sigma_A\sqrt{i}} \leq t\bigg) = \Phi(t).
    \end{equation}
\end{corollary}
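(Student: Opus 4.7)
The plan is to reduce the corollary to Lemma~\ref{prop:gaussianAA} via a simple decomposition plus Slutsky's theorem. First I would rewrite the log-belief ratio as $\blambda_i(\theta) = \log\bmu_i(\theta^\circ) - \log\bmu_i(\theta)$, and split the standardized quantity as
\begin{equation}
    \frac{\blambda_i(\theta)-\rho_\ars i}{\sigma_\ars\sqrt{i}} \;=\; -\frac{\log\bmu_i(\theta)+\rho_\ars i}{\sigma_\ars\sqrt{i}} \;+\; \frac{\log\bmu_i(\theta^\circ)}{\sigma_\ars\sqrt{i}}.
\end{equation}
Lemma~\ref{prop:gaussianAA} already handles the first summand on the right-hand side: it converges in distribution to a standard Gaussian, whose law is symmetric, so the sign flip is immaterial.

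Next I would argue that the second summand is negligible. Under Assumption~\ref{assum:global_identification}, the AA fusion rule is known to be truth-consistent \cite{jadbabaie_2012,zhao_2012,jadbabaie2013information}, i.e., $\bmu_i(\theta^\circ) \asto 1$. By continuity of $\log$ at $1$, this gives $\log\bmu_i(\theta^\circ) \asto 0$, and since $\sigma_\ars\sqrt{i}\to\infty$ it follows that
\begin{equation}
    \frac{\log\bmu_i(\theta^\circ)}{\sigma_\ars\sqrt{i}} \asto 0,
\end{equation}
which, in particular, implies convergence to zero in probability.

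The conclusion then follows from Slutsky's theorem: the sum of a sequence converging in distribution to a standard Gaussian and one converging in probability to $0$ converges in distribution to a standard Gaussian. The main obstacle is essentially non-existent once Lemma~\ref{prop:gaussianAA} is granted; the only subtlety worth flagging is that no quantitative rate on $\log\bmu_i(\theta^\circ)$ is required---the $\sqrt{i}$ normalization already absorbs any almost-sure convergence to $0$, so the qualitative consistency statement already available in the literature is sufficient.
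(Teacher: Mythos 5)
Your proof is correct and follows essentially the same route as the paper: reduce the corollary to Lemma~\ref{prop:gaussianAA} by showing the extra term $\log\bmu_i(\theta^\circ)/(\sigma_\ars\sqrt{i})$ is asymptotically negligible, which the paper does via the quantitative bound $\log(1-H\exp\{-i(\rho_\ars-\epsilon)\}) \leq \log\bmu_i(\theta^\circ) \leq 0$ on the high-probability event $\cA_0$, while you invoke almost-sure truth consistency directly and conclude with Slutsky's theorem. Both arguments are valid and rest on the same observation that consistency makes $\log\bmu_i(\theta)$ and $\blambda_i(\theta)$ behave alike after the $\sqrt{i}$ normalization.
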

\begin{proof}[Proof Sketch]
For the complete proof, see Appendix~\ref{appendix:aa_corollary}. Essentially, the proof follows from consistent truth learning, i.e., \(\bmu_i(\theta^\circ)~\to~1\) almost surely as \(i \to \infty\). This implies that, as \(i \to \infty\), \(\log {\bmu_i(\theta)}\) and \(\log {\bmu_i(\theta)}-\log {\bmu_i(\theta^\circ)}\) behave similarly.
\end{proof}
Next, we investigate the behavior of beliefs under the GA fusion rule.
\begin{lemma}[\textbf{Asymptotic normality of GA}]\label{prop:ga_normality} For each wrong hypothesis $\theta \neq \theta^\circ$, we define
the mean
    \begin{equation}\label{eq:ga_mean}
    \!\!\rho_{\grs} \triangleq -\E\Bigg[\sum_{k \in \cK} \pi_k \log \br_{k,i} \Bigg] \!\!=\!\!\sum_{k \in \cK} \!\pi_k \dkl(L_k(.|\theta^{\circ}) || L_k(.|\theta))
    \end{equation}
    and the variance \vspace{-0.7em}
    \begin{equation}
    \sigma_{\grs}^2 \triangleq \vari\Bigg[\sum_{k \in \cK} \pi_k \log \br_{k,i}\Bigg].\vspace{-0.5em}
    \end{equation}
\vspace{-0.5em}If $\sigma_\grs^2 < \infty$, then under Assumption~\ref{assum:global_identification} it holds that
    \begin{equation}
        \lim_{i \to \infty} \Pr\bigg(\frac{\blambda_i(\theta)-\rho_{\grs} i}{\sigma_{\grs}\sqrt{i}} \leq t\bigg) = \Phi(t).
    \end{equation}
\end{lemma}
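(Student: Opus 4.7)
The plan is to show that, under GA, the log-belief ratio \(\blambda_i(\theta)\) obeys an exact linear recursion with i.i.d.\ increments, so that the claim reduces immediately to the classical central limit theorem, with no bootstrapping argument of the kind used in Lemma~\ref{prop:gaussianAA}.

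First, I would derive the recursion. Combining the adapt step~\eqref{eq:dif_adapt_step} with the geometric fusion step~\eqref{eq:geometric_fusion}, and observing that every normalization constant cancels upon taking the ratio \(\bmu_i(\theta^\circ)/\bmu_i(\theta)\), I obtain
$$\frac{\bmu_i(\theta^\circ)}{\bmu_i(\theta)} = \prod_{k \in \cK}\bigg(\frac{\bpsi_{k,i}(\theta^\circ)}{\bpsi_{k,i}(\theta)}\bigg)^{\pi_k} = \frac{\bmu_{i-1}(\theta^\circ)}{\bmu_{i-1}(\theta)}\prod_{k \in \cK}(\br_{k,i})^{-\pi_k},$$
where I used \(\sum_{k\in\cK}\pi_k = 1\). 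Taking logarithms yields the linear recursion
$$\blambda_i(\theta) = \blambda_{i-1}(\theta) - \sum_{k \in \cK}\pi_k \log \br_{k,i},$$
which unrolls into the random walk
$$\blambda_i(\theta) = \blambda_0(\theta) + \sum_{j=1}^{i}\bigg(-\sum_{k \in \cK}\pi_k \log \br_{k,j}\bigg),$$
where \(\blambda_0(\theta)\) is a deterministic constant fixed by the initial beliefs.

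Second, I would invoke the Lindeberg-L\'evy CLT. The summands \(-\sum_{k}\pi_k\log\br_{k,j}\) are i.i.d.\ across \(j\) by the temporal i.i.d.\ assumption on \(\bxi_{k,j}\), have mean \(\rho_\grs\), and finite variance \(\sigma_\grs^2\) by hypothesis. Hence
$$\lim_{i\to\infty}\Pr\bigg(\frac{\sum_{j=1}^{i}\bigl(-\sum_{k}\pi_k\log\br_{k,j}\bigr) - \rho_\grs i}{\sigma_\grs\sqrt{i}} \leq t\bigg) = \Phi(t).$$
Since \(\blambda_0(\theta)/(\sigma_\grs\sqrt{i}) \to 0\) as \(i \to \infty\), Slutsky's theorem transfers this Gaussian limit to \(\blambda_i(\theta)\), giving the claim.

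Main obstacle: essentially none. The structural payoff is that GA commutes with the log-ratio operation, so all normalizations vanish and \(\blambda_i(\theta)\) is literally a random walk with i.i.d.\ drift; the non-linear coupling between \(\bx_i\) and \(\bmu_{i-1}\) through \(\log\sum_k \pi_k \br_{k,i}\) that made Lemma~\ref{prop:gaussianAA} delicate is absent here. Assumption~\ref{assum:global_identification} is needed only to guarantee that \(\rho_\grs>0\), so that the centering is meaningful and the equivalence with the \(\log\bmu_i(\theta)\) formulation (in the spirit of the corollary following Lemma~\ref{prop:gaussianAA}) carries over.
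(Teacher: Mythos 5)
Your proposal is correct and follows essentially the same route as the paper: both derive the exact linear recursion $\blambda_i(\theta) = \blambda_{i-1}(\theta) - \sum_{k \in \cK}\pi_k \log \br_{k,i}$, unroll it into a random walk with i.i.d.\ increments plus the bounded initial term $\blambda_0(\theta)$, and conclude by the central limit theorem. Your added detail on the cancellation of normalization constants and the explicit Slutsky step is a fine elaboration of the same argument, not a different approach.
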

\begin{proof}
The proofs from \cite{nedic_2017,lalitha_2018} for the convergence rate analysis can be easily extended to establish asymptotic normality. From \eqref{eq:dif_adapt_step} and \eqref{eq:geometric_fusion}, the log-belief ratio $\blambda_i(\theta)$ evolves according to the recursion: 
\begin{equation}
    \blambda_i(\theta) = \blambda_{i-1}(\theta) - \sum_{k \in \cK} \pi_k \log \br_{k,i}.
\end{equation}
Expanding the recursion over time, we obtain
\begin{equation}
    \blambda_i(\theta) = -\sum_{j=1}^{i} \bigg(\sum_{k \in \cK} \pi_k \log \br_{k,j}\bigg) + \blambda_0(\theta).
\end{equation}
Since \((i)\) $\blambda_0(\theta) < \infty$ due to initial beliefs being positive, \((ii)\) $\br_{k,i}$'s are i.i.d., and \((iii)\) $\sigma_\grs^2 < \infty$, an application of the central limit theorem \cite{sayed_2022} concludes the proof.
\end{proof}
\begin{figure*}[h!]\vspace*{-20pt}
	\centering
	\subfloat[a][]{
		\includegraphics[width=0.40\linewidth]{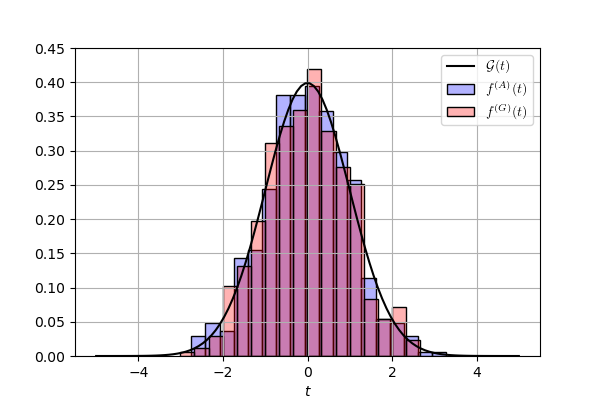}\label{fig:histo1}
	}\hfil
	\subfloat[b][]{
		\includegraphics[width=0.40\linewidth]{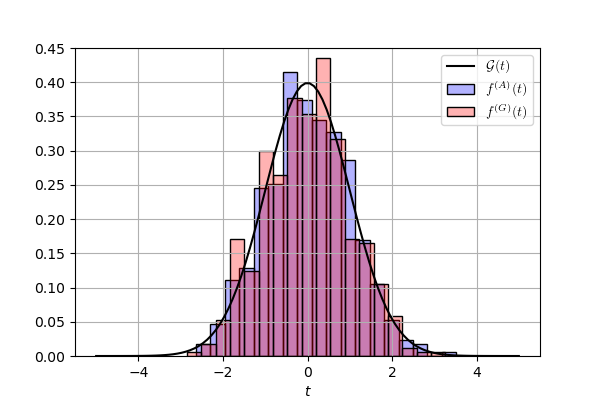}\label{fig:histo2}
	}\hfil
 \vspace*{-10pt}
	\caption{(a) For the first experiment with Gaussian data, the histograms of $\widetilde{\blambda}^{(A)}_i(\theta)$ and $\widetilde{\blambda}^{(G)}_i(\theta)$ are drawn as shaded bar plots with colors blue and red respectively. The standard normal density $\mathcal{G}(t)$ is shown as solid black line.
 (b) For the second experiment with exponentially distributed data, the histograms of $\widetilde{\blambda}^{(A)}_i(\theta)$ and $\widetilde{\blambda}^{(G)}_i(\theta)$ are drawn as shaded bar plots with colors blue and red respectively. The standard normal density $\mathcal{G}(t)$ is shown as solid black line.}
 \vspace*{-24pt}
\end{figure*}

\begin{figure*}[h!]
	\centering
	\subfloat[a][]{
		\includegraphics[width=0.40\linewidth]{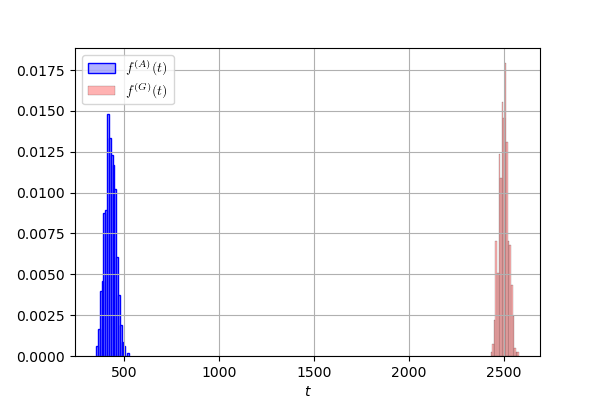}\label{fig:indep}
	}\hfil
	\subfloat[b][]{
		\includegraphics[width=0.40\linewidth]{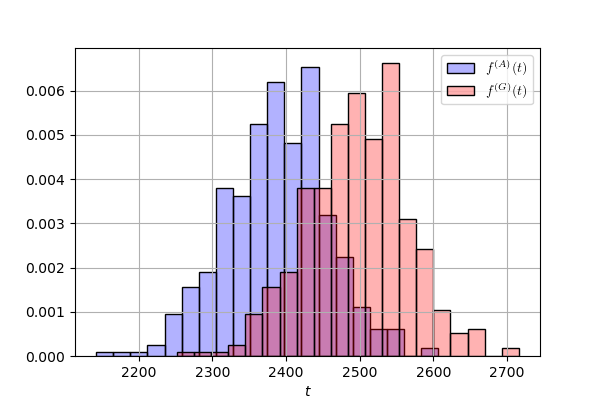}\label{fig:corr}
	}\hfil
	\vspace*{-10pt}\caption{(a) For the uncorrelated Gaussian case, histograms of ${\blambda}^{(A)}_i(\theta)$ and ${\blambda}^{(G)}_i(\theta)$ are drawn as shaded bar plots with colors blue and red respectively.
 (b) For the correlated Gaussian case, histograms of ${\blambda}^{(A)}_i(\theta)$ and ${\blambda}^{(G)}_i(\theta)$ are drawn as shaded bar plots with colors blue and red respectively.}
\end{figure*}

In Lemmas~\ref{prop:gaussianAA} and \ref{prop:ga_normality}, we have established the asymptotic normality for each wrong hypothesis, which allows approximating the error probabilities when agents or the central processor employs a maximum a-posteriori (MAP) rule to estimate \(\theta^\circ\). It is important to note that under spatial independence, GA matches the optimal Bayes posterior, which attains minimum error probability, when the confidence weight \(\pi_k\) on each agent \(k\) is uniform. Additionally, the convergence rates \(\rho_{\grs}\) and \(\rho_{\ars}\) were already established in previous works \cite{nedic_2017,lalitha_2018} and \cite{kayaalp2022aaga_journal} respectively. Notice that \(\rho_{\grs} \geq \rho_{\ars}\) --- applying Jensen's inequality to the equation on the left in \eqref{eq:aa_mean}, we obtain \eqref{eq:ga_mean}. However, this does not directly imply that GA has a better performance in probability of error criterion since it depends on the variances as well. Our simulations indicate, however, that GA has lower error probability in general. We leave it to future work to determine whether GA is superior to AA in terms of asymptotic error probability under any data distribution or confidence weight configuration --- such as in the convergence rate case.

\section{Numerical Results}\label{sec:numerical}

We consider a binary hypothesis testing problem with \(K=10\) distributed agents connected to a fusion center. In the first experiment, agents receive i.i.d. Gaussian observations with mean 0 under the true hypothesis $\theta^\circ$, and with mean 1 under $\theta \neq \theta^\circ$. For both hypotheses, the correlation matrix is the identity matrix $I_K$, i.e., data is independent across agents. A possible application of this experiment is in flexible spectrum use, for example in cognitive radios. The possible hypotheses are whether the spectrum is idle or not. If there is noise only, the signal distributions would be central (i.e., 0 mean). Otherwise, if there is a signal present, it would not be 0 mean. We choose the combination vector as \vspace{-0.5em}
\begin{equation}\pi=[0.13,0.2,0.09,0.15,0.08,0.05,0.1,0.05,0.1,0.05].\vspace{-0.1em}\end{equation}
If we take the time horizon $i = 5000$ and repeat the experiment for 500 realizations, we obtain the histograms given in Figure \ref{fig:histo1}. Specifically, we obtain the normalized log-belief ratios: \begin{equation}
\widetilde{\blambda}^{(A)}_i(\theta) \triangleq \frac{\blambda^{(A)}_i(\theta) - \rho_\ars i}
{\sqrt{i}\sigma_\ars},\quad \widetilde{\blambda}^{(G)}_i(\theta) \triangleq \frac{\blambda^{(G)}_i(\theta) - \rho_\grs i}
{\sqrt{i}\sigma_\grs}
\end{equation}
and plot their histograms in density forms (denoted by $f^{(A)}(t)$ and $f^{(G)}(t)$, respectively) to compare them with the standard normal density $\mathcal{G}(t) \triangleq \frac 1 {\sqrt{2\pi}}\exp(-t^2/2)$. In addition to the visual matching supporting our claims, we performed a Shapiro-Wilk test for testing the normality of $\widetilde{\blambda}^{(A)}_i(\theta)$ and $\widetilde{\blambda}^{(G)}_i(\theta)$. The resulting $p$-values turn out to be 0.66 and 0.54, which strongly support the null hypothesis of normal distribution.

For the second experiment, we assume that agents receive i.i.d. exponential observations with mean 1 under $\theta^\circ$ and with mean $0.5$ under $\theta\neq \theta^\circ$. The other problem parameters are the same as the first experiment. The resulting histograms are given in Figure \ref{fig:histo2}. The Shapiro-Wilk test yields $p$-values 0.93 and 0.53, respectively, again, strongly supporting the null hypothesis of normal distribution.

We also conducted a third experiment to verify that correlations between the agents affect $\rho_{\ars}$ but do not affect $\rho_{\grs}$ (see \eqref{eq:ga_mean}). To compare correlated versus uncorrelated data across agents, we simulate the setting from the first experiment, except that this time we set the correlation matrix as $\Sigma = 0.95 \mathbbm{1}_K \mathbbm{1}^{\T}_K +0.05 I_K$. Note that this choice makes the data highly correlated. Then, we plot the histograms of $\blambda_i^{(A)}(\theta)$ and $\blambda_i^{(G)}(\theta)$ (without normalization) for both uncorrelated and correlated cases in Figure \ref{fig:indep} and \ref{fig:corr}. We can observe that $\rho_{\grs} = 0.5$ is unchanged by the correlations --- the observation that $f^{(G)}$'s are concentrated around $i\times\rho_{\grs} = 2500$ supports this fact. In contrast, $\rho_{\ars}$ increases drastically under the presence of positive correlations. Nevertheless, in both cases considered, GA outperforms AA in terms of the error probability. 

\section{Conclusion}

In this work, we considered two pooling algorithms for federated decision making, namely AA and GA. The choice between AA and GA may depend on the specific task at hand. We study their performance in terms of error probability, and establish asymptotic normality which is a valuable tool that can help avoid the need to simulate the algorithm for large time horizons. Note that the parameters in the normal approximations can be obtained either analytically or through Monte Carlo simulations, which is more efficient compared to the exact simulation of the entire system. We leave the study for asymptotic normality of general decentralized networks to a future work.


\bibliographystyle{IEEEtran}
\bibliography{ref.bib}

\newpage

\appendix

\section{Proof of Lemma 1}\label{appendix:prop_aa}
Consider an arbitrary $\theta \neq \theta^\circ$. We denote the log-belief on $\theta$ as $ \bx_i \triangleq \log {\bmu_i(\theta)}$. It was shown in \cite{kayaalp2022aaga_journal} that $\dfrac 1 i \bx_i \to -\rho_{\ars}$ almost surely as $i \to \infty$. This implies that for any $\epsilon > 0$, there exists an $i_0$ such that
\begin{equation}\label{eq:a0}
    \cA_0 \triangleq \{\bmu_i(\theta) \leq \exp\{{-i(\rho_{\ars} - \epsilon)}\},\ \forall i \geq i_0, \forall \theta \neq \theta^\circ\} 
\end{equation}
is a high-probability event. After this observation, we split the proof into two parts:
\begin{itemize}
    \item[(a)] $\liminf_{i \to \infty} \Pr\bigg(\dfrac{\bx_i+\rho_{\ars} i}{\sigma_{\ars}\sqrt{i}} < t\bigg) \geq \Phi(t)$
    \item[(b)] $\limsup_{i \to \infty} \Pr\bigg(\dfrac{\bx_i+\rho_{\ars} i}{\sigma_{\ars}\sqrt{i}} \leq t\bigg) \leq \Phi(t)$.
\end{itemize}
Then, it readily follows from (a) and (b) that
\begin{equation}
    \lim_{i \to \infty} \Pr\bigg(\frac{\bx_i+\rho_{\ars} i}{\sigma_{\ars}\sqrt{i}} \leq t\bigg) = \Phi(t).
\end{equation}
\subsection{Proof of (a)}

Using \eqref{eq:dif_adapt_step} and \eqref{eq:linear_fusion}, the log-belief $\bx_i$ can be upper bounded under the event $\cA_0$ as
\begin{equation}\label{eq:iteration_upperbnd}
    \bx_i \leq \bx_{i_0} + \!\!\!\sum_{j=i_0+1}^i \bL_j \!- \!\!\!\sum_{j=i_0}^{i-1}\log(1 - H \exp\{{-j(\rho_{\ars} - \epsilon)}\})
\end{equation}
where we defined $\bL_i \triangleq \log(\sum_k \pi_k \br_{k,i})$ and recall that $H$ is the number of hypotheses. Since
\begin{equation}
-\sum_{j=i_0}^\infty\log(1 - H\exp \{{-j(\rho_{\ars} - \epsilon)}) \} < \infty,
\end{equation}
for every $\delta > 0$ there must exist an $i_1^+ > i_0$ such that for all $i \geq i_1^+$:
\begin{equation}
    -\frac 1 {\sqrt{i}} \sum_{j=i_0}^{i-1}\log(1 -  H\exp \{{-j(\rho_{\ars} - \epsilon)}) \}) \leq \delta.
\end{equation}
Consequently, the event
\begin{equation}
\cA_1^{+} \triangleq \Big \{\frac 1 {\sqrt{i}} \bx_i  \leq  \frac 1 {\sqrt{i}} \bx_{i_0} + \frac 1 {\sqrt{i}} \sum_{j=i_0+1}^i \bL_j + \delta, \forall i \geq i_1^+ \Big \}
\end{equation}
is also a high-probability event. Furthermore, since $\bx_{i_0}$ is almost surely finite, there is an $i_2^+ > i_0$ such that
\begin{equation}
    \cA_2^+ \triangleq \{\frac 1 {\sqrt{i}}  \bx_{i_0} \leq \delta, \forall i\geq i_2^+\}
\end{equation}
is a high-probability event as well. Accordingly, for any outcome in $\cA^+ \triangleq \cA_1^+ \cap \cA_2^+$, and for $i \geq \max\{i_1^+,i_2^+\}$, we have 
\begin{equation}
    \frac 1 {\sqrt{i}} \bx_i  \leq  \frac 1 {\sqrt{i}} \sum_{j=i_0+1}^i \bL_j + 2\delta,
\end{equation}
which directly implies
\begin{equation}\label{eq:upperbnd}
    \frac{\bx_i+\rho_{\ars} i}{\sigma_{\ars}\sqrt{i}}  \leq  \frac{\sum_{j=i_0+1}^i \bL_j+\rho_{\ars} i}{\sigma_{\ars}\sqrt{i}} + \frac{2\delta}{\sigma_\ars}.
\end{equation}
Finally, with $\overline{\cA^+}$ denoting the complement of $\cA^+$, we obtain that for $i \geq \max\{i_1^+, i_2^+\}$:
\begin{align}
    &\Pr\bigg(\frac{\bx_i+\rho_{\ars} i}{\sigma_{\ars}\sqrt{i}} \geq t\bigg) \notag \\
    &= \Pr\bigg(\frac{\bx_i+\rho_{\ars} i}{\sigma_{\ars}\sqrt{i}} \geq t \: \cap \: \cA^+\bigg) + 
    \Pr\bigg(\frac{\bx_i+\rho_{\ars} i}{\sigma_{\ars}\sqrt{i}} \geq t \: \cap \: \overline{\cA^+}\bigg) \notag \\
    &\leq \Pr\bigg(\frac{\bx_i+\rho_{\ars} i}{\sigma_{\ars}\sqrt{i}} \geq t \: \cap \: \cA^+\bigg) + 
    \Pr(\overline{\cA^+}) \notag \\
    &\leq \Pr\bigg(\frac{\sum_{j=i_0+1}^i \bL_j+\rho_{\ars} i}{\sigma_{\ars}\sqrt{i}} \geq t - \frac{2\delta}{\sigma_{\ars}}\bigg) + \Pr(\overline{\cA^+}),\label{eq:ccdf}
\end{align}
where \eqref{eq:ccdf} follows from \eqref{eq:upperbnd}. Since $\bL_i$'s are i.i.d. random variables with finite variance, we can use the central limit theorem (CLT) \cite{sayed_2022} and arrive at
\begin{equation}\label{eq:liminf}
    \limsup_{i \to \infty} \Pr\bigg(\frac{\bx_i+\rho_{\ars} i}{\sigma_{\ars}\sqrt{i}} \geq t\bigg) \leq 1 - \Phi(t)
\end{equation}
by using the facts that $\delta$ and $\Pr(\overline{\cA^+})$ can be made arbitrarily small and $\Phi(\cdot)$ is continuous. Note that \eqref{eq:liminf} is equivalent to
\begin{equation}\label{eq:liminf2}
    \liminf_{i \to \infty} \Pr\bigg(\frac{\bx_i+\rho_{\ars} i}{\sigma_{\ars}\sqrt{i}} < t\bigg) \geq \Phi(t).
\end{equation}
\subsection{Proof of (b)}
Similar to the proof of (a), we obtain a lower bound for $\bx_i$ under the event $\cA_0$ by
\begin{align}
    &\!\!\bx_i \geq \bx_{i_0} + \!\!\!\sum_{j=i_0+1}^i \!\! \log\bigg(\sum_{k \in \cK} \frac{\pi_k \br_{k,j}} {1 + \exp\{-(j-1)(\rho_{\ars} - \epsilon)\}\sum_{\theta \neq \theta^\circ} \br_{k,j}(\theta)}\bigg) \notag\\
   &\!\!\! \geq \bx_{i_0} \!\!+ \!\!\!\! \sum_{j=i_0+1}^i \!\!\!\log\!\bigg(\!\sum_{k \in \cK} \frac{\pi_k \br_{k,j}} {1 + \exp\{-(j-1)(\rho_{\ars} - \epsilon)\}\sum_{k \in \cK}\sum_{\theta \neq \theta^\circ} \br_{k,j}(\theta)}\!\bigg) \notag \\
    &\!\!\geq\bx_{i_0}\!\!+\!\!\!\!\!\sum_{j=i_0+1}^i\!\!\!\bL_j
    -\!\!\!\!\!\sum_{j=i_0+1}^i\!\log\bigg({1 + \exp\{-(j-1)(\rho_{\ars} - \epsilon)\}\sum_{k \in \cK}\sum_{\theta \neq \theta^\circ} \br_{k,j}(\theta)}\bigg) \notag \\
    &\!\!\geq \!\bx_{i_0} \!\! +\!\!\!\!\! \sum_{j=i_0+1}^i \!\!\!\bL_j -\!\!\!\!\!\sum_{j=i_0+1}^i \!\!\!\!\exp\{-(j-1)(\rho_{\ars} - \epsilon)\}\sum_{k \in \cK}\sum_{\theta \neq \theta^\circ} \!\!\br_{k,j}(\theta).\label{eq:iteration_lowerbnd}
\end{align}
Comparing with \eqref{eq:iteration_upperbnd}, we aim to show that the rightmost term in \eqref{eq:iteration_lowerbnd}, i.e., 
\begin{equation}
    \bSa_i \triangleq  \sum_{j=i_0+1}^i \exp\{-(j-1)(\rho_{\ars} - \epsilon)\}\sum_{k \in \cK}\sum_{\theta \neq \theta^\circ} \br_{k,j}(\theta)
\end{equation}
tends to zero almost surely when divided by $\sqrt{i}$. Observe that $\bSa_i$ is non-decreasing. Hence, 
\begin{equation}
    \bSa_{\infty} \triangleq \lim_{i\to \infty} \bSa_i
\end{equation}
exists almost surely, i.e., 
\begin{equation}
    \limsup_{i\to \infty} \bSa_i = \liminf_{i\to \infty} \bSa_i.
\end{equation}
Moreover, by the monotone convergence theorem \cite{williams_1991},
\begin{equation}
    \mathbb{E}[\bSa_{\infty}] = \lim_{i \to \infty} \mathbb{E}[\bSa_i] < \infty.
\end{equation}
 Consequently, $\bSa_{\infty}$ must be finite almost surely. Then, by combining the inequality
\begin{equation}
    0 \leq \frac {\bSa_{i}} {\sqrt{i}} \leq \frac {\bSa_{\infty}} {\sqrt{i}}
\end{equation}
and the limit $\dfrac {\bSa_{\infty}} {\sqrt{i}} \to 0$ as \(i \to \infty\) (both of which hold almost surely), we conclude that $\dfrac {\bSa_{i}} {\sqrt{i}} \to 0$ almost surely as well. Therefore, for every $\delta > 0$ there must exist an $i_1^- > i_0$ such that for all $i \geq i_1^-$:
\begin{equation}
        \frac{1}{\sqrt{i}}  \sum_{j=i_0+1}^i \exp\{-(j-1)(\rho_{\ars} - \epsilon)\}\sum_{k \in \cK}\sum_{\theta \neq \theta^\circ} \br_{k,j}(\theta) \leq \delta.
\end{equation}

Then, proceeding similarly to the part (a), we define the high-probability sets: 
\begin{equation}
\cA_1^{-} \triangleq \Big \{\frac 1 {\sqrt{i}} \bx_i  \geq  \frac 1 {\sqrt{i}} \bx_{i_0} + \frac 1 {\sqrt{i}} \sum_{j=i_0+1}^i \bL_j - \delta, \forall i \geq i_1^- \Big \}
\end{equation}
and
\begin{equation}
    \cA_2^- \triangleq \{\frac 1 {\sqrt{i}}  \bx_{i_0} \geq -\delta, \forall i\geq i_2^-\}.
\end{equation}
By defining ${\cA^- \triangleq \cA_1^- \cap \cA_2^-}$, we arrive at the following analog to \eqref{eq:ccdf} for $i \geq \max\{i_1^-,i_2^-\}$:
\begin{align}
    \!\!\!\Pr\bigg(\frac{\bx_i+\rho_{\ars} i}{\sigma_{\ars}\sqrt{i}} \leq t\bigg) \leq \Pr\bigg(\frac{\sum_{j=i_0+1}^i \bL_j+\rho_{\ars} i}{\sigma_{\ars}\sqrt{i}} \leq t + \frac{2\delta}{\sigma_{\ars}}\bigg) + \Pr(\overline{\cA^-}),
\end{align}
which implies
\begin{equation}\label{eq:limsup}
    \limsup_{i \to \infty} \Pr\bigg(\frac{\bx_i+\rho_{\ars} i}{\sigma_{\ars}\sqrt{i}} \leq t\bigg) \leq \Phi(t),
\end{equation}
and the proof is complete.

\section{Proof of Corollary 1}\label{appendix:aa_corollary}

Notice that for any outcome in $\cA_0$, defined in \eqref{eq:a0}, it is true that
\begin{equation}
1-H \exp \{-i(\rho_{\ars}-\epsilon)\} \leq \bmu_i(\theta^\circ) \leq 1,
\end{equation}
which implies
\begin{equation}
\log(1-H\exp\{-i(\rho_{\ars}-\epsilon)\}) \leq \log\bmu_i(\theta^\circ) \leq 0.
\end{equation}
Since 
\begin{equation}
    \dfrac{1}{\sqrt{i}}\log(1-H\exp\{-i(\rho_{\ars}-\epsilon)\}) \to 0,
\end{equation}
the result of Lemma~\ref{prop:gaussianAA} applies here as well.

\end{document}